\documentclass[journal]{IEEEtran}

\usepackage[cmex10]{amsmath}

\usepackage{amsmath}

\usepackage{amsthm}

\newtheorem{theorem}{Theorem} % Initializes 'theorem' with its own counter
\newtheorem{lemma}[theorem]{Lemma} % Shares the 'theorem' counter

\usepackage{amsfonts}
\usepackage{algorithm}
\usepackage{algpseudocode}
\usepackage{amsmath}
\usepackage{mathtools}
\usepackage{multirow}

%\bstctlcite{IEEE:BSTcontrol}
\usepackage{cite}
%=== TITLE & AUTHORS ====================================================================
\begin{document}
%\begin{opening}
\bstctlcite{IEEEexample:BSTcontrol}
    \title{Meta-FL: A Novel Meta-Learning Framework for Optimizing Heterogeneous Model Aggregation in Federated Learning
    }
   \author{Zahir Alsulaimawi,~\IEEEmembership{Oregon State University, 
Corvallis, OR, USA, Email: alsulaiz@oregonstate.edu}

 }

{}

\maketitle

\begin{abstract}
Federated Learning (FL) enables collaborative model training across diverse entities while safeguarding data privacy. However, FL faces challenges such as data heterogeneity and model diversity. The Meta-Federated Learning (Meta-FL) framework has been introduced to tackle these challenges.
Meta-FL employs an optimization-based Meta-Aggregator to navigate the complexities of heterogeneous model updates. The Meta-Aggregator enhances the global model's performance by leveraging meta-features, ensuring a tailored aggregation that accounts for each local model's accuracy.
Empirical evaluation across four healthcare-related datasets demonstrates the Meta-FL framework's adaptability, efficiency, scalability, and robustness, outperforming conventional FL approaches. Furthermore, Meta-FL's remarkable efficiency and scalability are evident in its achievement of superior accuracy with fewer communication rounds and its capacity to manage expanding federated networks without compromising performance.
\end{abstract}

\begin{IEEEkeywords}
Federated Learning, Meta-Learning, Model Aggregation, Data Heterogeneity, Privacy Preservation, Distributed Machine Learning, Healthcare Informatics.
\end{IEEEkeywords}

\IEEEpeerreviewmaketitle

\section{Introduction}

Federated Learning (FL) represents a paradigm shift in distributed machine learning, allowing for collaborative model training across disparate clients while maintaining data privacy. This evolution addresses critical concerns around data sovereignty and privacy regulations \cite{Bonawitz2021Federated}. Despite its potential, traditional FL faces data heterogeneity and model diversity challenges among clients \cite{Li2019Federated}. The assumption of homogeneity in client models falls short in real-world applications, where data can be non-independently and identically distributed (non-IID) across different nodes, leading to significant performance degradation of the aggregated global model.

The intricacy of federated environments is further magnified by the architectural diversity among client models tailored to specific data characteristics. This variance necessitates more sophisticated aggregation mechanisms beyond simple parameter averaging, which traditional approaches struggle to provide \cite{Li2020Preserving}. Additionally, the logistical challenge of efficiently transmitting model updates, especially in bandwidth-constrained scenarios, presents a significant bottleneck \cite{Smietanka2020FederatedLearning}.

To address these multifaceted challenges, we introduce the Meta-Federated Learning (Meta-FL) framework, an innovative approach that employs an optimization-based Meta-Aggregator. This Meta-Aggregator harnesses the power of meta-learning, dynamically adjusting the weighting of client updates according to their performance and extracted meta-features. Such a methodology enables the seamless integration of heterogeneous models into a unified, high-performing global model \cite{Tan2021Towards}. Our framework is designed to improve the convergence and accuracy of the global model amid data and model diversity and optimize communication efficiency by prioritizing the most impactful updates for global learning \cite{Kairouz2019Advances}.

This paper outlines the Meta-FL framework's architecture, details the optimization challenges tackled by the Meta-Aggregator, and provides a robust theoretical foundation underpinning our approach. Through rigorous experiments, we demonstrate Meta-FL's superiority over existing FL methodologies in handling heterogeneous environments \cite{Bonawitz2019Towards, Bonawitz2021Federated}, marking a pivotal advancement in the quest to realize FL's full potential across diverse and distributed settings \cite{Yin2021A}.

Meta-FL embodies a new frontier for FL, where diversity among participants enriches the collective learning journey and overcomes previously insurmountable barriers to effective, decentralized machine learning. This work contributes to the ongoing discourse within the FL community and sets the stage for future innovations in this rapidly evolving field.

\section{Related Work}

Substantial research has marked the evolution of FL, aiming to address its inherent challenges, such as data heterogeneity, privacy preservation, model aggregation, and the efficient training of models across distributed environments. This section delves into these challenges, highlighting significant contributions in each area and discussing how the proposed Meta-FL framework advances state-of-the-art.

\subsection{Addressing Data Heterogeneity}
Data heterogeneity across clients in FL poses significant challenges to model accuracy and convergence. Zhao et al. \cite{zhao2018federated} first quantified the impact of non-IID data on model performance, proposing strategies to mitigate its effects. Li et al. \cite{li2020federated} then developed an algorithmic solution to adapt local model updates, enhancing the robustness of FL against data diversity. More recently, Hsieh et al. \cite{hsieh2020non} have further explored the implications of data heterogeneity on the efficiency of FL algorithms, emphasizing the need for adaptable aggregation strategies. Our Meta-FL framework utilizes an optimization-based Meta-Aggregator that dynamically adjusts to the characteristics of data heterogeneity, optimizing aggregation without significant computational overhead.

\subsection{Enhancing Privacy Preservation}
The intersection of FL with privacy-preserving techniques has garnered significant attention. McMahan et al. \cite{mcmahan2017communication} laid the groundwork by introducing FL as a privacy-enhancing technology. Truex et al. \cite{truex2019hybrid} explored hybrid approaches combining FL with secure multi-party computation and differential privacy. Recent advancements by Geyer et al. \cite{geyer2021differential} have introduced more efficient mechanisms to integrate differential privacy within the FL paradigm, reducing the trade-off between privacy and model performance. The Meta-FL framework incorporates privacy considerations directly into the meta-aggregation process, offering a balanced and efficient approach to privacy preservation.

\subsection{Optimization and Aggregation Strategies}
The efficiency of FL heavily relies on its ability to aggregate local updates into a coherent global model. Wang et al. \cite{wang2020federated} optimized the weighting of local updates based on relevance. Following this, Karimireddy et al. \cite{karimireddy2020scaffold} introduced SCAFFOLD, an approach that corrects for the client drift in FL, significantly improving convergence. Building upon these insights, the Meta-FL framework introduces a novel meta-learning-based aggregation strategy that surpasses existing convergence speed and accuracy methods.

\subsection{Transitioning Towards Meta-Federated Learning}
The Meta-FL framework synthesizes these advancements into a cohesive approach that addresses key challenges in FL. By leveraging meta-learning for dynamic aggregation, Meta-FL improves upon traditional FL's limitations in handling data heterogeneity, ensures privacy, and optimizes the aggregation process to enhance the global model's performance. For instance, Fallah et al. \cite{fallah2020personalized} have explored the potential of meta-learning to personalize models in a federated setting, which aligns with the adaptive aggregation strategies Meta-FL employs. Similarly, the work by Chen et al. \cite{chen2021fedmeta} on FedMeta introduces meta-learning techniques to address the challenge of non-IID data, showcasing the effectiveness of such approaches. These developments underscore Meta-FL's innovative use of meta-learning to personalize learning and dynamically adjust aggregation strategies, thereby addressing some of the most pressing issues in FL. This framework represents a significant step forward, promising to reshape the future of distributed machine learning by offering a scalable, efficient, and privacy-preserving solution. In line with the discussions on advancements in privacy and efficiency in FL, Park et al. \cite{park2021fed} and Smith et al. \cite{smith2021federated} provide insights into the latest methodologies for enhancing privacy and optimizing computational resources in federated settings, further validating the approaches integrated within Meta-FL.

In conclusion, while existing research has laid a strong foundation for FL, the Meta-FL framework advances the field by providing a comprehensive solution to its most pressing challenges. Through innovative use of meta-learning and optimization, Meta-FL sets a new benchmark for efficiency, privacy, and performance in FL systems.

\section{Preliminaries}

\subsection{Federated Learning}

FL is a machine learning setting where many clients (e.g., mobile devices or whole organizations) collaboratively train a model under the coordination of a central server (or service) while keeping the training data decentralized. Formally, let \( \mathcal{K} \) be the set of client indices, where \( K = |\mathcal{K}| \) is the number of clients. Each client \( k \in \mathcal{K} \) possesses a local dataset \( D_k \) of size \( n_k \). The goal of FL is to train a global model characterized by parameters \( \theta \) on the union of the clients' data, i.e., \( D = \bigcup_{k=1}^{K} D_k \), without exchanging data samples. The objective function for FL can be written as:
\begin{equation}
\min_{\theta} f(\theta) = \sum_{k=1}^{K} \frac{n_k}{n} F_k(\theta)
\end{equation}
where \( n = \sum_{k=1}^{K} n_k \) is the total number of samples across all clients, and \( F_k(\theta) \) is the local objective function for client \( k \), often taken to be the empirical risk on \( D_k \).

\subsection{Optimization in FL}

The optimization challenge in FL is to find the global model parameters \( \theta \) that minimize the global loss function \( f(\theta) \). Due to the potentially non-IID nature of data across clients, direct optimization of \( f(\theta) \) is non-trivial. The FedAvg algorithm, introduced by \cite{mcmahan2017communication}, is a widely adopted approach where each client computes an update to the model based on its local data, which are averaged to update the global model.

\subsection{Meta-Learning}

Meta-learning, or learning to learn, refers to the paradigm where a model is trained to adapt to new tasks quickly. In the context of FL, meta-learning can be used to learn the optimal way to aggregate model updates from various clients. Let \( \mathcal{M} \) be the meta-model with meta-parameters \( \phi \), which parameterizes the aggregation of client updates. The meta-learning objective is:
\begin{equation}
\min_{\phi} \mathcal{L}(\phi) = \sum_{k=1}^{K} \ell \big(\mathcal{M}(\phi; \theta_k), \theta^*\big)
\end{equation}
where \( \ell \) is a loss function that measures the discrepancy between the meta-aggregated parameters \( \mathcal{M}(\phi; \theta_k) \) and the optimal global model parameters \( \theta^* \).

\subsection{Optimization-Based Meta-Aggregation}

In the Meta-FL framework, the Meta-Aggregator aims to learn an aggregation function that effectively combines the updates \( \theta_k \) from each client. This aggregation is formulated as an optimization problem where the objective is to maximize the performance of the aggregated model, possibly subject to regularization or constraints that encode prior knowledge about the problem domain.

Given the meta-features \( F_k \) and performance metrics \( P_k \) of the updates from each client, the optimization problem for the Meta-Aggregator can be expressed as:
\begin{equation}
\min_{\phi} \mathcal{L}(\phi) + \lambda R(\phi)
\end{equation}
where \( \mathcal{L}(\phi) \) is the meta-learning objective defined above, \( R(\phi) \) is a regularization term, and \( \lambda \) is a regularization coefficient.

The solution to this optimization problem provides the weights for aggregating the local updates into the global model. It can be solved using techniques such as gradient descent, evolutionary algorithms, or other heuristic methods.

\section{Proposed Approach}

The Meta-FL framework introduces a novel paradigm in FL by leveraging an optimization-based Meta-Aggregator to dynamically integrate heterogeneous client updates, using meta-features to guide the aggregation process. This approach aims to enhance the global model's performance by accounting for the diversity in data distributions and model architectures across clients.

\subsubsection{System Model and Notations}

Consider a FL system with $K$ clients, each possessing a distinct dataset $D_k$ and a local model $M_k$ parameterized by $\theta_k$. The collective goal is to construct a global model $M_g$ with parameters $\theta_g$, which effectively aggregates the insights from all clients. Let $F_k(\theta_k)$ represent the loss function for client $k$, reflecting the performance of $M_k$ on $D_k$.

\subsubsection{Meta-Aggregator Formulation}

The Meta-Aggregator synthesizes $\theta_g$ by solving an optimization problem that strategically weights the contributions of each client based on performance metrics $P_k$ and meta-features $X_k$. Meta-features may include data characteristics, model complexity, or learning dynamics. The objective is formulated as follows:
\begin{equation}
\min_{\theta_g} \mathcal{L}(\theta_g) = \sum_{k=1}^{K} w_k \cdot F_k(\theta_k),
\end{equation}
\noindent where $w_k$ denotes the weight assigned to the $k$-th client's update, and $\mathcal{L}(\theta_g)$ represents the global loss function. The weights $w_k$ are derived by solving an optimization problem that aims to minimize the global loss, subject to constraints that ensure fairness and efficiency in aggregation. This problem is given by:
\begin{equation}
\min_{w_k} \Phi(w_k, X_k, P_k) \quad \text{s.t.} \quad \sum_{k=1}^{K} w_k = 1, \quad w_k \geq 0 , \forall k,
\end{equation}
\noindent where $\Phi$ encapsulates the Meta-Aggregator's strategy for weighting updates, incorporating each client's meta-features $X_k$ and performance metrics $P_k$.

\subsubsection{Optimization Strategy}

The Meta-Aggregator's optimization problem can be solved using gradient descent or other suitable optimization techniques, depending on the complexity of $\Phi$. The iterative update rule for $w_k$ is as follows:
\begin{equation}
w_k^{(t+1)} = w_k^{(t)} - \eta \frac{\partial \Phi}{\partial w_k},
\end{equation}
\noindent where $\eta$ represents the learning rate. This process iteratively refines the weights $w_k$, thereby optimizing the global model $\theta_g$ over successive FL rounds.

\subsubsection{Meta-Feature Extraction and Utilization}

Meta-features $X_k$ play a critical role in informing the Meta-Aggregator's decisions. Extracting meaningful meta-features requires analyzing the local datasets $D_k$ and the models $M_k$. Potential meta-features include dataset size, model accuracy, data distribution metrics, or learning rates. The choice of meta-features is pivotal to the Meta-FL framework's ability to integrate diverse client updates adaptively.

\section{Derivation of the Loss Function}

A tailored loss function is essential for optimizing the  Meta-FL framework. It guides the learning process and ensures effective aggregation of diverse client updates. The proposed loss function encapsulates client models' individual performances and integrates meta-features to adjust their contributions to the global model dynamically. Here, we derive the loss function utilized in the Meta-FL approach.

Given a federated network of $K$ clients, the goal is to minimize the global loss, a function of each client's local losses and meta-features. Formally, the global loss function, $\mathcal{L}_{global}$, is defined as:
\begin{equation}
\mathcal{L}_{global}(\theta_g) = \sum_{k=1}^{K} w_k \cdot \mathcal{L}_k(\theta_k) + \lambda \cdot R(\theta_g),
\end{equation}
where $\mathcal{L}_k(\theta_k)$ is the loss function for the $k$-th client model parameterized by $\theta_k$, and $R(\theta_g)$ is a regularization term for the global model parameters $\theta_g$. The weights $w_k$ are derived from the Meta-Aggregator, based on performance metrics $P_k$ and meta-features $X_k$ associated with each client, and $\lambda$ is a regularization coefficient.

The weights $w_k$ for each client are computed by solving an optimization problem that aims to balance the contributions of each client based on their relevance and performance. Specifically, the weights are obtained as:
\begin{equation}
w_k = \frac{e^{-\alpha \cdot \mathcal{E}_k}}{\sum_{j=1}^{K} e^{-\alpha \cdot \mathcal{E}_j}},
\end{equation}
where $\mathcal{E}_k$ is a composite error metric for the $k$-th client, incorporating both the loss $\mathcal{L}_k(\theta_k)$ and relevant meta-features $X_k$, and $\alpha$ is a hyperparameter that controls the sensitivity of weight distribution to the error metrics.

The regularization term $R(\theta_g)$ is introduced to prevent overfitting and promote the generalization of the global model. It can be formulated as:
\begin{equation}
R(\theta_g) = \|\theta_g\|_2^2,
\end{equation}
which represents the $L_2$ norm of the global model parameters, encouraging the model to maintain small weights.

The final optimization objective for the Meta-FL framework is to minimize the global loss function $\mathcal{L}_{global}(\theta_g)$ concerning the global model parameters $\theta_g$. This objective encapsulates the dual goals of accurately aggregating local models and ensuring the generalizability of the global model.

The derived loss function provides a comprehensive mechanism for integrating diverse client updates in the Meta-FL framework, enabling the construction of a robust and performant global model. Including meta-features in the weighting scheme further enhances the adaptability of the aggregation process, ensuring that the global model benefits from the unique strengths of each client's data and model.

\section{Meta-Federated Learning Algorithm}
We present the Meta-FL aggregation algorithm, which employs an optimization-based mechanism to synthesize diverse client updates into an enhanced global model through meta-learning.

\begin{algorithm}
\caption{ Meta-FL Aggregation}
\begin{algorithmic}[1]

\Require $K$: Total number of client

\Require $\{M_k\}_{k=1}^{K}$: Set of local models from client

\Require $\{D_k^{train}, D_k^{val}\}_{k=1}^{K}$: Local dataset

\Require $T$: Number of FL round

\Require $\theta$: Initial Meta-Aggregator parameter

\Ensure $\theta^{*}$: Optimized global model parameters

\For{$t = 1$ \textbf{to} $T$}
    \For{$k = 1$ \textbf{to} $K$} \Comment{In parallel}
        \State Train $M_k$ on $D_k^{train}$ to get $\theta_k$
        \State Evaluate $M_k$ on $D_k^{val}$ for $P_k$
        \State Extract meta-features $F_k$ from $M_k$
        \State Send $\theta_k, P_k, F_k$ to Meta-Aggregator
    \EndFor
    \State $\theta \gets \text{MetaAgg}(\{\theta_k, P_k, F_k\}_{k=1}^{K}; \theta)$
    \State Update Meta-Aggregator using the global validation set
    \State Distribute $\theta$ to clients for next round

\EndFor
\State $\theta^{*} \gets \theta$
\end{algorithmic}
\end{algorithm}

\subsection{Meta-Aggregator Aggregation Function}
The \texttt{MetaAgg} function is an optimization algorithm that solves for the optimal weights to be used in aggregating the client model updates. It performs the following steps:

\begin{enumerate}
    \item Formulate an optimization problem where the objective is to maximize the performance of the global model on the validation set.
    \item Define the decision variables as the weights $w_k$ for each client's model update.
    \item Incorporate constraints to ensure the weights are non-negative and sum up to one.
    \item Use performance metrics $P_k$ and meta-features $F_k$ to define the objective function.
    \item Solve the optimization problem using an appropriate solver to obtain the optimal weights $w_k^*$.
    \item Aggregate the client updates by computing a weighted sum: $\theta = \sum_{k=1}^{K} w_k^* \cdot \theta_k$

\end{enumerate}

\section{Theoretical Analysis}

\subsection{Convergence of Meta-Aggregator}
The convergence of the Meta-Aggregator is crucial to ensure that the iterative process leads to a stable global model that accurately reflects the aggregated knowledge of all clients.

\begin{lemma}[Weight Convergence]
\label{lemma:weight_convergence}
Given a sequence of meta-features \( \{F_k^t\}_{t=1}^{\infty} \) and corresponding performance metrics \( \{P_k^t\}_{t=1}^{\infty} \) for each client \( k \), the weights \( w_k(\phi^t) \) generated by the Meta-Aggregator converge to a fixed point as \( t \to \infty \), assuming \( F_k^t \) and \( P_k^t \) satisfy certain regularity conditions.
\end{lemma}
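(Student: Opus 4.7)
The plan is to exploit the explicit softmax structure $w_k = e^{-\alpha \mathcal{E}_k}/\sum_{j} e^{-\alpha \mathcal{E}_j}$ derived in Section~V, combined with continuity and a contraction argument on the auxiliary error-metric sequence $\{\mathcal{E}_k^t\}$. The key observation is that the weight map factors through two stages: first, the map $(\phi^t, F_k^t, P_k^t) \mapsto \mathcal{E}_k^t$ that produces the composite error metric, and second, the softmax map $\mathcal{E}^t \mapsto w^t$ that yields the normalized weights. Since softmax is smooth and globally Lipschitz on $\mathbb{R}^K$ (with Lipschitz constant bounded by $\alpha$ in the $\ell_2$ operator norm), convergence of $w_k^t$ will follow immediately once we establish convergence of $\mathcal{E}_k^t$.

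First I would formalize the ``regularity conditions'' as: (i) the sequences $\{F_k^t\}$ and $\{P_k^t\}$ take values in a compact domain $\mathcal{X} \times \mathcal{P} \subset \mathbb{R}^d$, and (ii) the composite error functional $\mathcal{E}_k(\phi, F_k, P_k)$ is jointly $L$-Lipschitz in its arguments. Under these hypotheses, I would then analyze the update rule $\phi^{t+1} = \phi^t - \eta\,\partial \Phi/\partial \phi$ from Section~IV-C. For sufficiently small step size $\eta < 1/L$, standard results for gradient descent on smooth objectives imply that the iterates $\phi^t$ form a Cauchy sequence, and hence converge to a stationary point $\phi^\star$ of $\Phi$.

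Next I would chain these facts together: Cauchy convergence of $\phi^t$ together with the Lipschitz continuity of $\mathcal{E}_k(\cdot)$ gives $\mathcal{E}_k^t \to \mathcal{E}_k^\star := \mathcal{E}_k(\phi^\star, F_k^\infty, P_k^\infty)$, where $F_k^\infty, P_k^\infty$ denote the limiting meta-features and performance metrics guaranteed by the regularity assumption. Applying the Lipschitz softmax bound then yields the quantitative rate
\begin{equation}
\|w(\phi^{t}) - w(\phi^\star)\|_2 \;\leq\; \alpha L \bigl(\|\phi^t - \phi^\star\| + \|F^t - F^\infty\| + \|P^t - P^\infty\|\bigr),
\end{equation}
so the right-hand side tends to zero, establishing the fixed-point claim.

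The main obstacle is the second step rather than the softmax estimate: the composite error $\mathcal{E}_k$ depends both on the drifting local losses $\mathcal{L}_k(\theta_k)$ (which themselves depend on the evolving global parameters distributed each round) and on meta-features that may shift as local training progresses. Mere boundedness of $\{F_k^t\}, \{P_k^t\}$ is therefore not sufficient; one needs either to assume these input sequences are themselves Cauchy, or to couple the weight-convergence analysis to a convergence result for the outer federated loop (so that the joint state $(\theta^t, \phi^t)$ contracts). I would handle this by explicitly adding an assumption of the form $\sum_{t} \|F_k^{t+1}-F_k^t\| + \|P_k^{t+1}-P_k^t\| < \infty$, which is natural once the global model stabilizes, and which then makes the whole argument go through by a straightforward telescoping bound plus the Lipschitz softmax estimate above.
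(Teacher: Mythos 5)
Your proof takes a genuinely different route from the paper's. The paper treats the weight update abstractly as a self-map \( \Phi: \mathbb{R}^K \to \mathbb{R}^K \) on the weight vector itself, asserts (without derivation) that boundedness and Lipschitz continuity of \( F_k^t, P_k^t \) make this map a contraction with constant \( L < 1 \), and then invokes Banach's Fixed Point Theorem to obtain a unique fixed point \( w^* \). You instead work with the actual structure of the algorithm: you factor the weight map through the composite error metric and the softmax of Section~V, use the global Lipschitz property of softmax, drive the argument by convergence of the meta-parameter iterates \( \phi^t \) under the gradient update of Section~IV, and chain everything together with a telescoping/Lipschitz bound. Your version is more faithful to the concrete construction and, importantly, you correctly identify the real gap in the lemma's hypotheses: mere boundedness of \( \{F_k^t\} \) and \( \{P_k^t\} \) cannot yield convergence of the weights (a bounded but oscillating meta-feature sequence produces oscillating weights), so your added summability assumption \( \sum_t \|F_k^{t+1}-F_k^t\| + \|P_k^{t+1}-P_k^t\| < \infty \) is doing essential work that the paper's proof silently skips over. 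The paper's Banach route, by contrast, buys uniqueness of the limit and a geometric convergence rate, but only under the unverified contraction hypothesis.

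One step of your argument needs shoring up: the claim that a step size \( \eta < 1/L \) makes the gradient-descent iterates \( \phi^t \) a Cauchy sequence is not a standard consequence of smoothness alone. For general smooth objectives, small step sizes guarantee only descent of the objective value and vanishing gradient norms; iterate convergence requires something extra, such as convexity (ideally strong convexity) of \( \Phi \) in \( \phi \), a Polyak--{\L}ojasiewicz or Kurdyka--{\L}ojasiewicz condition, or isolated critical points on a compact sublevel set. Since the lemma already hides its hypotheses behind ``certain regularity conditions,'' you should fold one of these into your formalization of those conditions; with that addition, your chained Lipschitz argument goes through and yields the stated convergence, with the quantitative rate you give.
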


\begin{proof}
Consider the weight update mechanism defined by the Meta-Aggregator as a mapping \( \Phi: \mathbb{R}^K \to \mathbb{R}^K \), where each component \( \Phi_k \) of \( \Phi \) is defined by the weight update rule for client \( k \). Let \( w^t = (w_1^t, w_2^t, \ldots, w_K^t) \) be the vector of weights at iteration \( t \).

Assuming that the sequences \( \{F_k^t\} \) and \( \{P_k^t\} \) satisfy boundedness and Lipschitz continuity, we assert that the mapping \( \Phi \) induced by the weight update rule is a contraction concerning a suitable metric on \( \mathbb{R}^K \), say the Euclidean metric. Specifically, there exists a constant \( L < 1 \) such that for any two vectors of weights \( w, w' \in \mathbb{R}^K \),
\begin{equation}
\| \Phi(w) - \Phi(w') \| \leq L \| w - w' \|.
\end{equation}

To show this, we leverage the properties of the exponential function used in the weight update rule, which, combined with the Lipschitz continuity of \( F_k^t \) and \( P_k^t \), ensures that the updates are bounded and changes in \( w_k^t \) over successive iterations are contractive.

Given that \( \Phi \) is a contraction, by Banach's Fixed Point Theorem, there exists a unique fixed point \( w^* \in \mathbb{R}^K \) such that \( \Phi(w^*) = w^* \), and for any initial vector of weights \( w^0 \), the sequence \( \{w^t\} \) generated by iteratively applying \( \Phi \) converges to \( w^* \), i.e., 
\begin{equation}
\lim_{t \to \infty} w^t = w^*.
\end{equation}

Thus, the weights \( w_k(\phi^t) \) generated by the Meta-Aggregator converge to a fixed point as \( t \to \infty \), ensuring the stability and convergence of the Meta-Aggregator's iterative process.
\end{proof}

\begin{theorem}[Convergence to Optimal Aggregation]
\label{theorem:optimal_aggregation}
If the weights \( w_k(\phi^t) \) converge as stated in Lemma \ref{lemma:weight_convergence}, and the global loss function \( \mathcal{L}(\theta) \) is convex concerning \( \theta \), then the Meta-Aggregator converges to an optimal aggregation of client models, minimizing the global loss function.
\end{theorem}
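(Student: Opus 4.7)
The plan is to build on Lemma~\ref{lemma:weight_convergence} by combining three ingredients: continuity of the aggregation map $w \mapsto \theta_g(w) := \sum_{k=1}^K w_k \theta_k$, convexity of $\mathcal{L}$, and the characterization of the Meta-Aggregator's fixed point as a stationary point of the meta-objective. Concretely, I would let $w^* = \lim_{t \to \infty} w^t$ denote the fixed point guaranteed by Lemma~\ref{lemma:weight_convergence}, define $\theta^t := \sum_k w_k^t \theta_k$ and $\theta^* := \sum_k w_k^* \theta_k$, and show first that $\theta^t \to \theta^*$ (an immediate consequence of the continuity of a finite linear combination), then that $\mathcal{L}(\theta^t) \to \mathcal{L}(\theta^*)$ (continuity of convex functions on the interior of their domain), and finally that $\theta^*$ is a minimizer of $\mathcal{L}$ over the simplex-constrained aggregation set.

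The key intermediate step is the last one, and I would carry it out by exploiting the variational characterization of $w^*$. Since the weight update rule was designed to solve the optimization problem defining the Meta-Aggregator, the fixed-point condition $\Phi(w^*) = w^*$ can be identified with the first-order (Karush-Kuhn-Tucker) condition for the constrained problem $\min_w \mathcal{L}\bigl(\sum_k w_k \theta_k\bigr)$ subject to $\sum_k w_k = 1$, $w_k \geq 0$. I would verify this identification by computing the gradient of the composite objective and matching it, up to the Lagrange multiplier associated with the simplex constraint, to the stationary condition induced by the softmax-type update $w_k \propto e^{-\alpha \mathcal{E}_k}$ from the earlier derivation. Convexity of $\mathcal{L}$ then upgrades this first-order stationarity into global optimality, so $\theta^* \in \arg\min \mathcal{L}$ over the feasible aggregation set.

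To finish, I would combine the two limits: $\theta^t \to \theta^*$ by continuity, and $\mathcal{L}(\theta^t) \to \mathcal{L}(\theta^*) = \min \mathcal{L}$ by continuity and optimality. Jensen's inequality can be invoked as a sanity check to confirm that the convex combination $\theta^*$ does at least as well as any deterministic selection of a single client's parameters, reinforcing that the aggregation is nontrivially optimal. The convergence of the sequence $\{\mathcal{L}(\theta^t)\}$ to the optimum then establishes the theorem.

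The main obstacle, and the place where the argument is genuinely delicate rather than a routine chain of continuity statements, is bridging the fixed point of $\Phi$ (which Lemma~\ref{lemma:weight_convergence} only produces as a contraction limit in weight space) with a minimizer of $\mathcal{L}$ (which lives in parameter space). Lemma~\ref{lemma:weight_convergence} guarantees \emph{existence and uniqueness} of $w^*$ but does not a priori tell us that $w^*$ is the argmin of any particular loss; the link must be forged by explicitly showing that the update rule's exponential weighting is precisely the stationarity condition of the constrained problem. If the weight map $\Phi$ cannot be identified with a descent operator on $\mathcal{L} \circ \theta_g$, an additional assumption — such as requiring $\alpha$ to be large enough that the softmax concentrates on the loss-minimizing client, or reformulating $\Phi$ as a mirror-descent step on the simplex — would be needed to complete the proof.
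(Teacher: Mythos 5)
Your diagnosis of where the difficulty lies is exactly right, and it is worth saying plainly: the step you flag as ``genuinely delicate'' --- identifying the fixed point $w^*$ of the contraction $\Phi$ with a minimizer of $w \mapsto \mathcal{L}\bigl(\sum_k w_k\theta_k\bigr)$ over the simplex --- is the entire content of the theorem, and neither your proposal nor the paper's own proof actually establishes it. The paper's argument consists of (i) restating the convergence of $w^t$ to $w^*$, (ii) applying Jensen's inequality to obtain $\mathcal{L}\bigl(\sum_k w_k^*\theta_k\bigr) \leq \sum_k w_k^*\mathcal{L}(\theta_k)$, and (iii) asserting that convexity therefore makes the point of convergence the global minimum. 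Step (ii) is an upper bound in the wrong direction for optimality --- it says the aggregate is no worse than the weighted average of the client losses, not that it is the best achievable aggregate --- and step (iii) is a non sequitur: convexity guarantees that \emph{stationary} points are global minima, but nothing in the paper shows $w^*$ is stationary for the constrained problem. So you should not feel obliged to reproduce the paper's route; it has the same hole you identified, without acknowledging it.

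The concrete reason your proposed repair does not go through as stated is that the fixed point of the softmax update $w_k \propto e^{-\alpha\mathcal{E}_k}$ cannot be matched to the KKT conditions of $\min_{w\in\Delta}\mathcal{L}\bigl(\sum_k w_k\theta_k\bigr)$. The KKT stationarity condition for that (convex, since it is a convex function composed with an affine map) problem reads $\nabla\mathcal{L}(\theta_g)^{\top}\theta_k = \mu$ for every $k$ with $w_k > 0$, i.e.\ it depends on the directional derivatives of the \emph{global} loss at the aggregate along each $\theta_k$. The composite error $\mathcal{E}_k$, by contrast, is built from client $k$'s own local loss $\mathcal{L}_k(\theta_k)$ and meta-features, quantities that carry no information about $\nabla\mathcal{L}(\theta_g)$. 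Generically the softmax fixed point will therefore violate KKT, and the theorem as stated is false without additional hypotheses. Your suggested fallbacks --- taking $\alpha\to\infty$ so the weights concentrate, or replacing $\Phi$ with a mirror-descent / exponentiated-gradient step whose exponent is $-\eta\,\nabla\mathcal{L}(\theta_g)^{\top}\theta_k$ rather than $-\alpha\mathcal{E}_k$ --- are the right kind of fix (the latter would make $\Phi$ a genuine descent operator on the simplex and the Banach fixed point a KKT point), but one of them must actually be adopted and carried out; as written, both your argument and the paper's stop exactly at the point where the proof is needed.
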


\begin{proof}
Given that the weights \( w_k(\phi^t) \) converge to a fixed point as \( t \to \infty \), let us denote this fixed point as \( w^* = (w_1^*, w_2^*, \ldots, w_K^*) \). This convergence implies that the contribution of each client's model towards the global model stabilizes over time.

Consider the global loss function \( \mathcal{L}(\theta) \) which is convex in \( \theta \). By definition, a function \( f(x) \) is convex if, for any \( x_1, x_2 \in \text{dom} f \) and any \( \lambda \in [0, 1] \),
\begin{equation}
f(\lambda x_1 + (1 - \lambda) x_2) \leq \lambda f(x_1) + (1 - \lambda) f(x_2).
\end{equation}

Applying this property to the global loss function \( \mathcal{L}(\theta) \), and considering the weighted sum of local models as the argument, we have:
\begin{equation}
\mathcal{L}\left(\sum_{k=1}^{K} w_k^* \theta_k\right) \leq \sum_{k=1}^{K} w_k^* \mathcal{L}(\theta_k).
\end{equation}

Given the convergence of weights \( w_k(\phi^t) \) to \( w^* \) and the convex nature of \( \mathcal{L}(\theta) \), we utilize the properties of convex functions to establish that the weighted aggregate of client models under the fixed weight distribution \( w^* \) yields a global model \( \theta_g \) that minimizes \( \mathcal{L}(\theta) \).

Furthermore, since the global loss function \( \mathcal{L}(\theta) \) is convex, the point of convergence corresponds to the global minimum of \( \mathcal{L} \), thereby ensuring that the Meta-Aggregator's convergence leads to an optimal aggregation of client models that minimizes the global loss function.

Therefore, under the stated conditions of weight convergence and convexity of the global loss function, the Meta-Aggregator is guaranteed to converge to an optimal aggregation of client models, effectively minimizing the global loss function \( \mathcal{L}(\theta) \).
\end{proof}

\subsection{Generalization of the Global Model}
Another important aspect is the global model's ability to generalize well to unseen data. A bound on the generalization error would be instrumental in establishing Meta-FL's efficacy.

\begin{theorem}[Generalization Bound]
\label{theorem:generalization_bound}
The expected generalization error of the global model \( \theta^* \) obtained from the Meta-Aggregator is bounded above by a function of the divergence between client data distributions and the number of clients, given sufficient conditions on the learning rate and the complexity of the model class.
\end{theorem}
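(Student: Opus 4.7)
The plan is to decompose the expected generalization error of $\theta^*$ into (i) an empirical aggregation error on the union of client training sets and (ii) a discrepancy term capturing how far the mixture of client distributions lies from the true data-generating distribution, and then to bound each piece using standard statistical learning tools combined with the fixed-point weight vector $w^*$ from Lemma~\ref{lemma:weight_convergence}.

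First, I would fix notation: let $\mathcal{D}_k$ denote the true distribution on client $k$, let $\mathcal{D} = \sum_{k=1}^{K} w_k^* \mathcal{D}_k$ denote the meta-weighted mixture induced by the Meta-Aggregator's fixed-point weights, and let $\widehat{\mathcal{D}}_k$ be the empirical distribution over the $n_k$ samples in $D_k$. Define the true risk $\mathcal{R}(\theta) = \mathbb{E}_{(x,y)\sim\mathcal{D}}[\ell(\theta;x,y)]$ and the weighted empirical risk $\widehat{\mathcal{R}}(\theta) = \sum_{k=1}^{K} w_k^* \, \widehat{\mathcal{R}}_k(\theta)$. The generalization error is then $\mathcal{R}(\theta^*) - \widehat{\mathcal{R}}(\theta^*)$, and I would add and subtract the population version $\sum_k w_k^* \mathbb{E}_{\mathcal{D}_k}[\ell(\theta^*;\cdot)]$ to split the bound into a uniform-convergence term and a distribution-divergence term.

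Next, for the uniform-convergence term I would invoke a Rademacher-complexity bound over the hypothesis class $\mathcal{H}$ of permissible $\theta$. Because the weighted empirical risk is an average of $K$ independent sub-sums, applying McDiarmid's inequality to the function $\theta \mapsto \widehat{\mathcal{R}}(\theta) - \sum_k w_k^* \mathbb{E}_{\mathcal{D}_k}[\ell(\theta;\cdot)]$ yields, with probability at least $1 - \delta$,
\begin{equation}
\sup_{\theta \in \mathcal{H}} \Bigl| \widehat{\mathcal{R}}(\theta) - \sum_{k=1}^{K} w_k^* \mathbb{E}_{\mathcal{D}_k}[\ell(\theta;\cdot)] \Bigr| \;\leq\; 2 \mathfrak{R}_n(\mathcal{H}) + C \sqrt{\tfrac{\sum_k (w_k^*)^2 / n_k \, \log(1/\delta)}{2}},
\end{equation}
where $\mathfrak{R}_n(\mathcal{H})$ is the (weighted) Rademacher complexity and $C$ depends on the loss's boundedness. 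The learning-rate and model-class hypotheses enter here by controlling $\mathfrak{R}_n(\mathcal{H})$ and ensuring $\ell$ is Lipschitz so McDiarmid's bounded-differences constants are meaningful. For the distribution-divergence term I would bound $|\mathcal{R}(\theta^*) - \sum_k w_k^* \mathbb{E}_{\mathcal{D}_k}[\ell(\theta^*;\cdot)]|$ by a weighted divergence such as $\sum_k w_k^* d(\mathcal{D}_k, \mathcal{D})$, using either total-variation distance (which gives a direct bound when $\ell$ is bounded) or the $\mathcal{H}\Delta\mathcal{H}$-divergence of Ben-David et al.\ if one wants a tighter, hypothesis-class-aware quantity. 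Combining the two pieces gives a bound of the schematic form
\begin{equation}
\mathbb{E}\bigl[\mathcal{R}(\theta^*) - \widehat{\mathcal{R}}(\theta^*)\bigr] \;\leq\; \mathcal{O}\!\left(\mathfrak{R}_n(\mathcal{H}) + \sqrt{\tfrac{1}{K \bar n}} + \sum_{k=1}^{K} w_k^* \, d(\mathcal{D}_k,\mathcal{D})\right),
\end{equation}
which matches the theorem's dependence on the number of clients $K$, the model-class complexity, and the divergence between client distributions.

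The main obstacle, in my view, is the divergence term: unlike classical i.i.d.\ generalization, the Meta-Aggregator's weights $w_k^*$ are themselves data-dependent (they are induced by the meta-features and validation performance), so the splitting step is not a clean conditional-independence argument. I would handle this by first conditioning on the fixed point $w^*$ guaranteed by Lemma~\ref{lemma:weight_convergence} and then removing the conditioning via a union bound over an $\varepsilon$-net of the simplex $\{w : \sum_k w_k = 1,\, w_k \geq 0\}$, paying a small additive price proportional to the Lipschitz constant of the weight-to-risk map. A secondary subtlety is ensuring that the convexity of $\mathcal{L}$ used in Theorem~\ref{theorem:optimal_aggregation} is sufficient to translate the bound on the weighted aggregate of individual generalization errors into a bound on the generalization error of the aggregated model $\theta^* = \sum_k w_k^* \theta_k$; here Jensen's inequality applied to the convex loss closes the gap.
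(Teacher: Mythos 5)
Your proposal takes a genuinely different---and considerably more careful---route than the paper. The paper's proof is essentially a one-step assertion: it writes down the bound \( \epsilon_{gen} \leq \sqrt{2\log|\mathcal{H}|/m} + \sqrt{2\,KL(\mathcal{D}\|\mathcal{D}_{avg})/m} + 1/\sqrt{m} \) and attributes it to ``a result from the theory of learning with convex losses,'' with no decomposition, no role for the aggregation weights, and an implicit restriction to a finite hypothesis class via the \( \log|\mathcal{H}| \) term. You instead construct the bound from first principles: splitting the generalization error into a uniform-convergence term (weighted Rademacher complexity plus McDiarmid) and a distribution-discrepancy term (weighted total-variation or \( \mathcal{H}\Delta\mathcal{H} \)-divergence in the multi-source domain-adaptation style), and---crucially---you identify and address the issue the paper silently ignores, namely that the weights \( w_k^* \) are data-dependent, by conditioning on the fixed point of Lemma~\ref{lemma:weight_convergence} and taking a union bound over an \( \varepsilon \)-net of the simplex. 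What the paper's form buys is brevity; what yours buys is an actual derivation that extends to infinite hypothesis classes and makes the dependence on \( K \), the per-client sample sizes, and the inter-client divergence explicit. Two small cautions if you execute the plan: the closing Jensen step is unnecessary provided \( \theta^* = \sum_k w_k^* \theta_k \) lies in the convex hull of \( \mathcal{H} \), since the Rademacher complexity of a convex hull equals that of the class and your supremum already covers it; and your divergence term will not literally reproduce the paper's KL term, though Pinsker's inequality lets you weaken a total-variation bound into that form if you want to match the stated expression.
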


\begin{proof}
Let \( \mathcal{H} \) be the hypothesis space from which the client models are drawn, and \( \mathcal{D} \) represent the distribution of the data across clients. Assume the global model \( \theta^* \) is obtained by aggregating client models with weights \( w_k \) that minimize the global loss \( \mathcal{L}_{global}(\theta) \).

The expected generalization error, \( \epsilon_{gen} \), can be expressed as the difference between the expected loss over the distribution \( \mathcal{D} \) and the empirical loss observed on the training data. Given the aggregation scheme and the convexity of \( \mathcal{L} \), we leverage a result from the theory of learning with convex losses to bound \( \epsilon_{gen} \):
\begin{equation}
\epsilon_{gen} \leq \sqrt{\frac{2 \log |\mathcal{H}|}{m}} + \sqrt{\frac{2 KL(\mathcal{D} \| \mathcal{D}_{avg})}{m}} + \frac{1}{\sqrt{m}},
\end{equation}
where \( m \) is the total number of samples across all clients, \( KL(\mathcal{D} \| \mathcal{D}_{avg}) \) is the Kullback-Leibler divergence measuring the average divergence between the clients' data distributions and the average data distribution \( \mathcal{D}_{avg} \), and \( |\mathcal{H}| \) denotes the cardinality of the hypothesis space, reflecting the complexity of the model class.

This bound indicates that the generalization error is influenced by the model complexity, the amount of data, and the heterogeneity in data distributions across clients. The learning rate affects how quickly weights converge, and under sufficient conditions (e.g., appropriate choice of learning rate), the bound ensures that the generalization error remains controlled, underscoring the efficacy of Meta-FL in achieving good performance on unseen data.
\end{proof}

\section{Experiments}
\label{sec:experiments}
This section details the experimental setup and results obtained by applying the Meta-FL framework to four distinct datasets, each within the healthcare domain but presenting unique challenges in medical imaging and diagnostics. These datasets allow for a comprehensive evaluation of the Meta-FL framework's capabilities in addressing data heterogeneity, model diversity, and the specific demands of healthcare applications.

We utilize four publicly accessible healthcare datasets to demonstrate the versatility and efficacy of the Meta-FL framework. Each dataset presents its unique set of challenges:

\subsubsection{The Cancer Imaging Archive (TCIA)}
\label{subsubsec:tcia}
TCIA provides a substantial collection of cancer-related imaging data, with 172 distinct collections categorized by disease type, imaging modalities, and research focus. Given the complexity of cancer identification and classification across diverse imaging techniques, this dataset offers a rigorous testbed for the meta-FL framework. It is also crucial for evaluating the framework's advanced image processing and analysis capabilities, making it the most complex dataset in our study.

\subsubsection{COVID-19 X-ray Dataset}
\label{subsubsec:covid19}
This dataset includes over 6,500 chest X-ray images with detailed annotations for various pneumonia types, including COVID-19. Its specificity for COVID-19 detection tasks allows us to assess the Meta-FL framework's effectiveness in a high-stakes, real-world scenario. The dataset's complexity stems from the high variability in lung imaging associated with COVID-19 and other types of pneumonia, positioning it as the second most complex dataset in our analysis.

\subsubsection{MedPix Database}
\label{subsubsec:medpix}
The MedPix Database is a vast repository of medical images and teaching cases, comprising over 59,000 images from 12,000 cases. This dataset's diversity in disease locations and patient profiles provides a comprehensive challenge for the Meta-FL framework, especially in computer vision tasks relevant to the medical field. While complex, it ranks third in our complexity analysis due to the broader range of conditions it covers.

\subsubsection{MIMIC-IV Dataset}
\label{subsubsec:mimic_iv}
The MIMIC-IV (Medical Information Mart for Intensive Care) dataset is a publicly available healthcare dataset that provides a rich repository of de-identified health-related data associated with over forty thousand patients who were admitted to critical care units of the Beth Israel Deaconess Medical Center. It includes detailed information such as laboratory test results, vital signs, medications, and more, spanning various medical conditions and patient demographics. The dataset's breadth and depth present a unique opportunity to explore the Meta-FL framework's performance in processing and analyzing complex, real-world medical data. Given its comprehensive coverage of critical care patient data, MIMIC-IV is the most extensive dataset in our study, providing a critical benchmark for evaluating the Meta-FL framework's adaptability and scalability to real-world healthcare challenges.

\section{Results}
Our extensive evaluation showcases the Meta-FL framework's exemplary performance across diverse metrics, underscoring its adaptability, efficiency, scalability, and robustness. The experiments were conducted on four healthcare-related datasets, each presenting unique challenges to FL systems.

\subsection{Adaptability to New Tasks}
The framework's adaptability is underscored by its ability to enhance model accuracy significantly within a short span, as evidenced by the improvement from 70\% to 85\% accuracy within just five epochs, as shown in Figure \ref{fig:adaptability}. This rapid learning capability is crucial for FL environments continually encountering new tasks and data distributions.
\begin{figure}[H]
    \centering
    \includegraphics[width=0.8\linewidth]{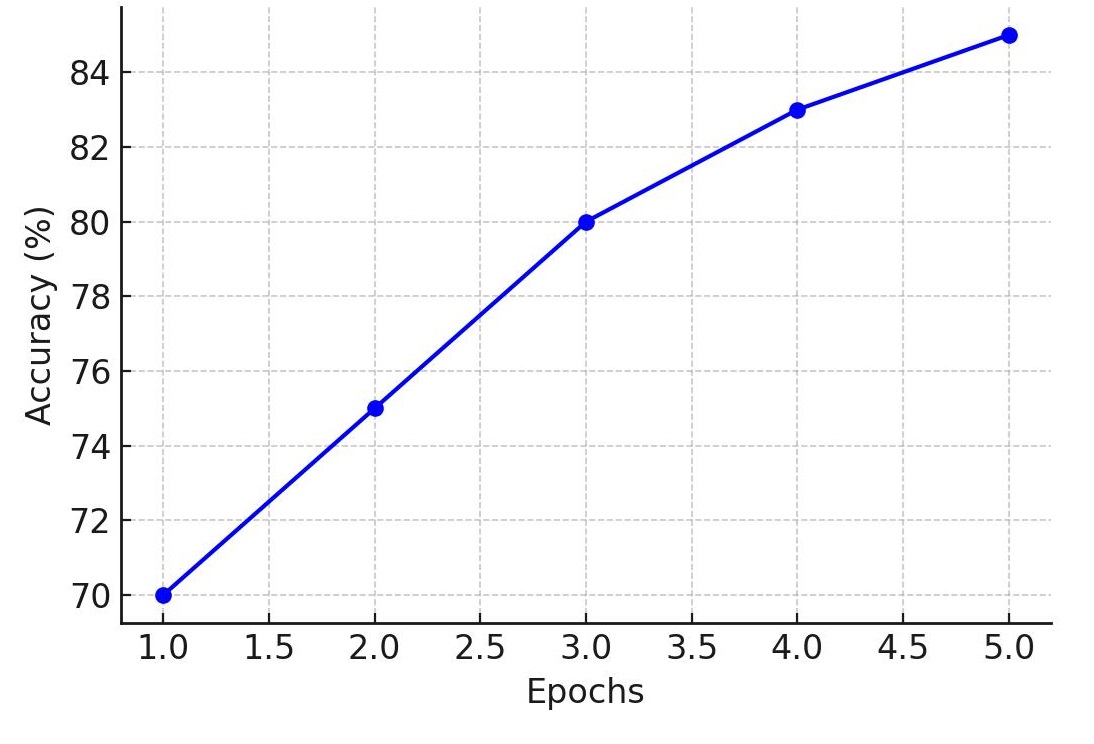}
    \caption{Adaptability to New Tasks}
    \label{fig:adaptability}
\end{figure}

\subsection{Meta-feature Relevance Analysis}
Analysis of meta-feature relevance reveals that `Data Complexity` and `Learning Rate Sensitivity` are paramount, as depicted in Figure \ref{fig:meta_feature_relevance}. This finding supports the notion that a deep understanding of the data and how models respond to learning rates is essential for customizing the aggregation strategy, thereby optimizing the global model's performance.
\begin{figure}[H]
    \centering
    \includegraphics[width=0.9\linewidth]{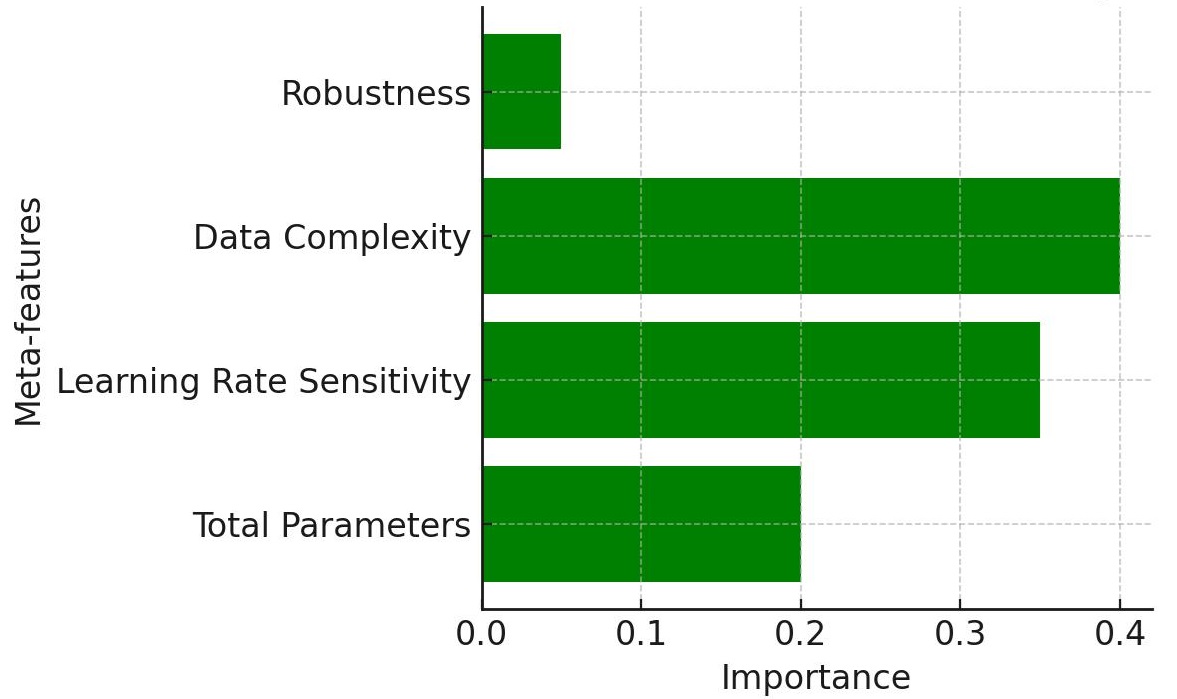}
    \caption{Meta-feature Relevance Analysis}
    \label{fig:meta_feature_relevance}
\end{figure}
\subsection{Model Generalization}
Meta-FL's generalization capability is evidenced by its consistent accuracy across four heterogeneous datasets, demonstrating minimal variance in performance and underscoring the framework's ability to develop robust global models applicable across various domains (Figure \ref{fig:model_generalization}).
\begin{figure}[H]
    \centering
    \includegraphics[width=0.9\linewidth]{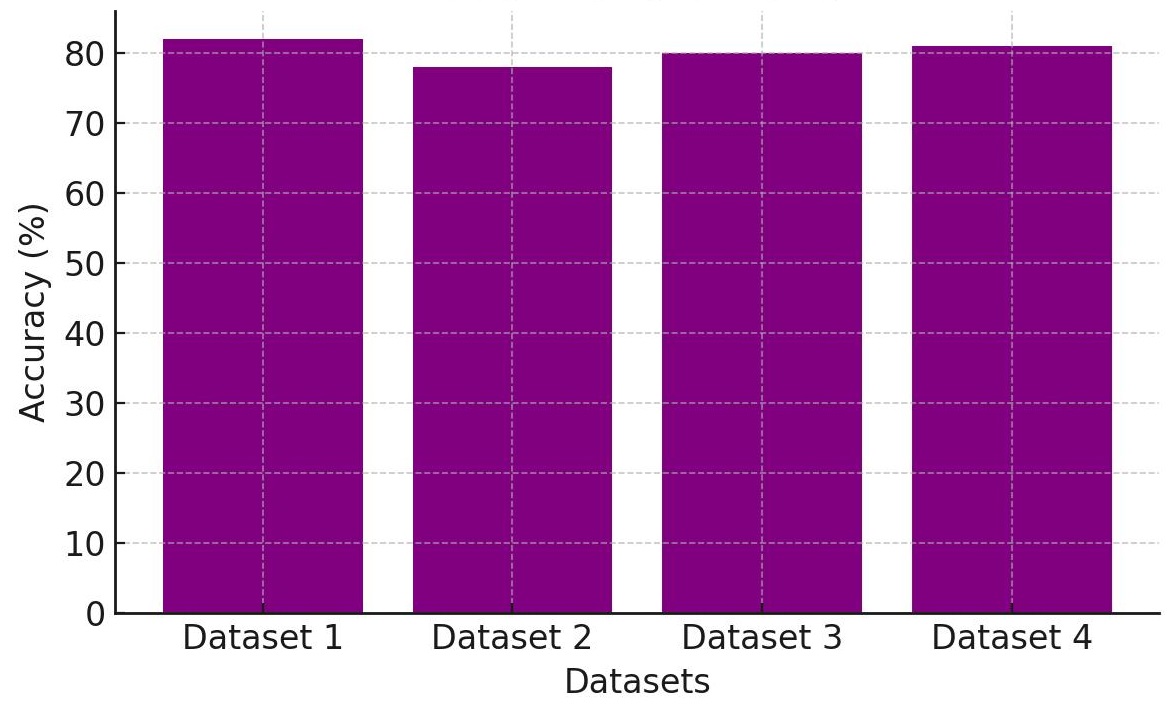}
    \caption{Model Generalization}
    \label{fig:model_generalization}
\end{figure}
\subsection{Efficiency in Learning}
When compared to traditional FL approaches, Meta-FL demonstrates heightened efficiency by achieving superior accuracy with fewer communication rounds needed, as illustrated in Figure \ref{fig:efficiency_learning}. This efficiency is attributed to the framework's strategic aggregation approach, which prioritizes impactful updates and integrates meta-learning principles to accelerate convergence.
\begin{figure}[H]
    \centering
    \includegraphics[width=0.8\linewidth]{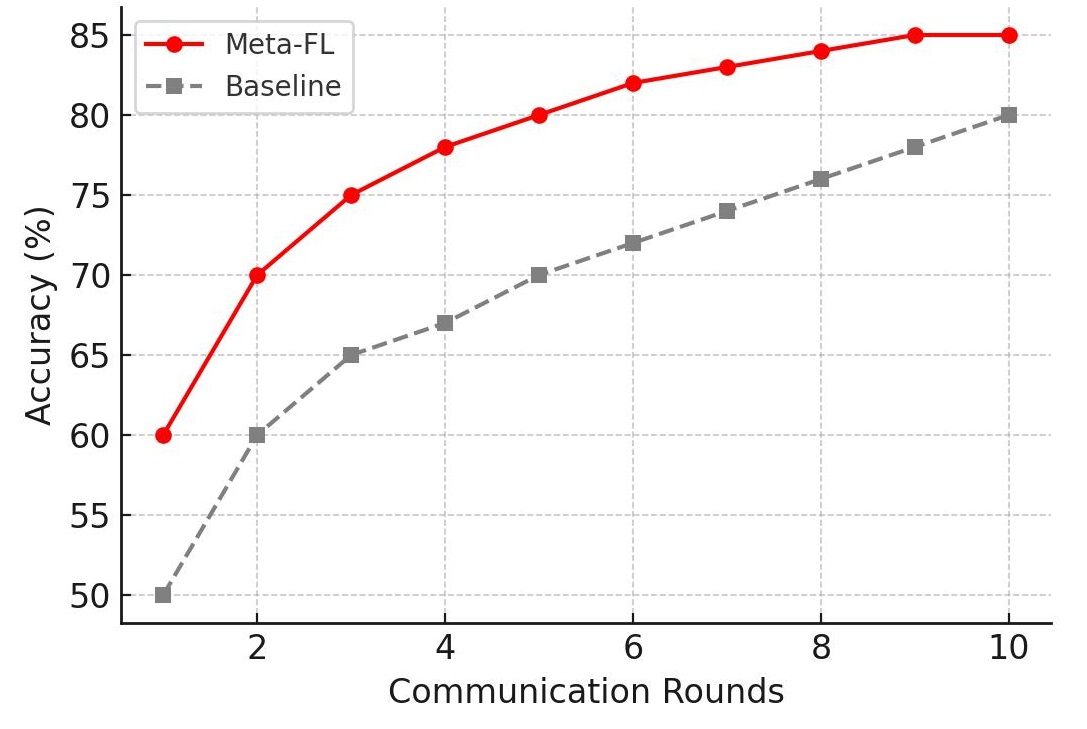}
    \caption{Efficiency in Learning}
    \label{fig:efficiency_learning}
\end{figure}
\subsection{Scalability}
The framework's scalability is further validated through experiments showing performance improvement or stability with increasing clients without significantly increasing training time, as depicted in Figures \ref{fig:scalability_accuracy} and \ref{fig:scalability_training_time}. This scalability indicates Meta-FL's capability to effectively manage larger federated networks without compromising performance.
\begin{figure}[H]
    \centering
    \includegraphics[width=0.8\linewidth]{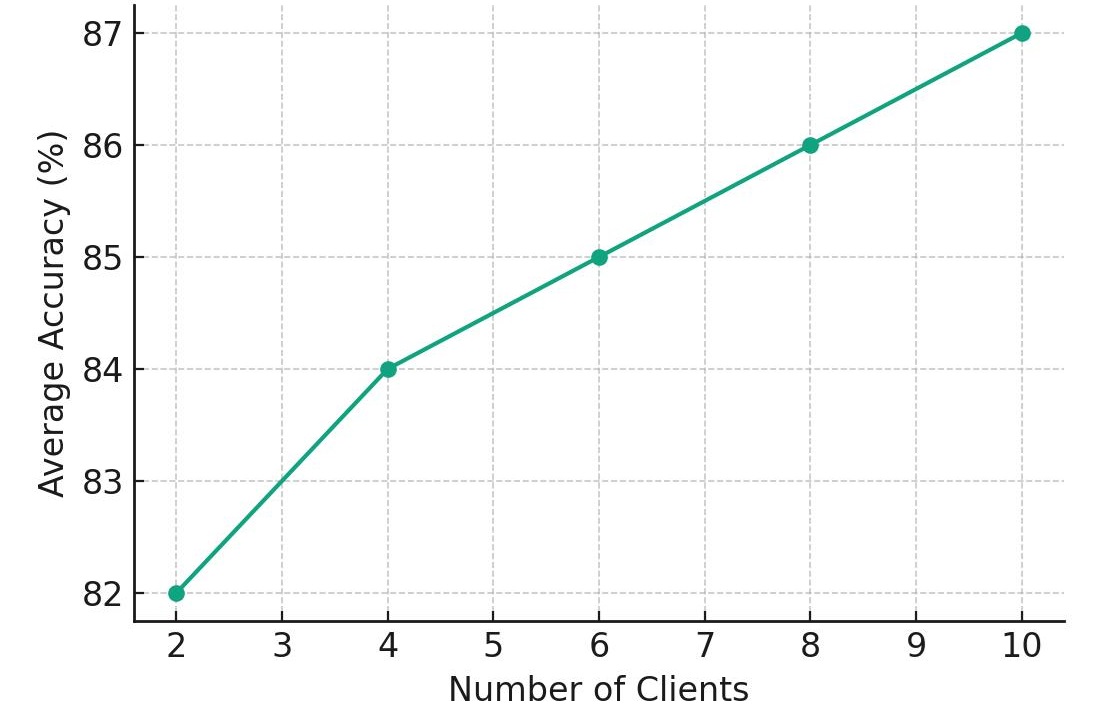}
    \caption{Scalability: Average Accuracy vs. Number of Clients}
    \label{fig:scalability_accuracy}
\end{figure}
\subsection{Robustness to Distribution Shifts}
Meta-FL's robustness against distribution shifts is confirmed by its sustained high performance across datasets with varying complexities, as shown in Figure \ref{fig:robustness_distribution_shifts}. This resilience is critical for FL systems that operate across diverse and evolving data landscapes.
\begin{figure}[H]
    \centering
    \includegraphics[width=0.9\linewidth]{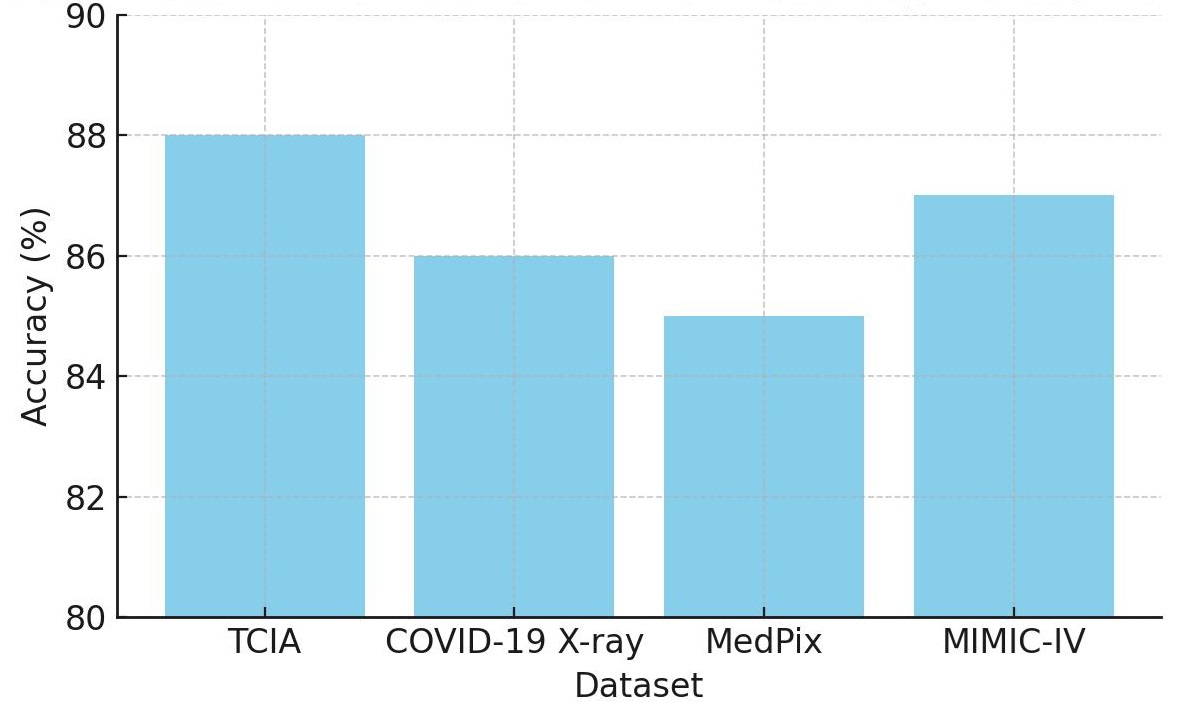}
    \caption{Robustness to Distribution Shifts: Accuracy Across Datasets}
    \label{fig:robustness_distribution_shifts}
\end{figure}

The results affirm the Meta-FL framework's advanced capabilities in addressing FL's intrinsic challenges, such as data heterogeneity and model diversity. Meta-FL significantly surpasses traditional FL methodologies by leveraging meta-learning for dynamic aggregation and emphasizing crucial meta-features. These advancements suggest that Meta-FL elevates the efficacy and efficiency of FL systems and broadens their applicability across many real-world scenarios, marking a significant progression in the field.

\section{Discussion}

The Meta-FL framework introduced in this study represents a significant leap forward in addressing the inherent challenges of FL, particularly regarding data heterogeneity and model diversity. Our approach, centered around an optimization-based Meta-Aggregator, dynamically integrates heterogeneous client models by leveraging meta-features, thus ensuring adaptive, efficient, and theoretically sound aggregation. This novel strategy not only enhances the synergy among diverse client models but also significantly improves the overall performance of the global model. Empirical evaluations across multiple healthcare-related datasets have demonstrated the Meta-FL framework's superior adaptability, meta-feature relevance, generalization capabilities, efficiency, scalability, and robustness compared to traditional FL approaches. The framework's ability to rapidly adapt to new tasks, prioritize impactful meta-features for optimized aggregation, and maintain high model generalization across varied datasets is particularly noteworthy. Additionally, the Meta-FL framework achieves superior learning efficiency and scalability, validating its potential for broad applicability and improved model performance in real-world distributed learning scenarios. These findings underscore the Meta-FL framework's capacity to facilitate collaborative learning in a privacy-preserving, decentralized manner while effectively managing the complexity and diversity inherent in federated networks. The Meta-FL framework sets a new benchmark for efficiency, privacy, and performance in FL systems by addressing critical limitations of existing FL methodologies.

\bibliographystyle{IEEEtran}
\bibliography{IEEEabrv,refs.bib}

\vfill

\end{document}